\documentclass{article}

\usepackage[preprint]{neurips_2026}
\workshoptitle{AI Discovery in the Wild}

\usepackage[utf8]{inputenc}
\usepackage[T1]{fontenc}
\usepackage{hyperref}
\usepackage{url}
\usepackage{booktabs}
\usepackage{amsfonts}
\usepackage{amsmath}
\usepackage{amssymb}
\usepackage{amsthm}
\newtheorem{proposition}{Proposition}
\newtheorem{remark}{Remark}

\DeclareMathOperator*{\argmin}{arg\,min}
\usepackage{nicefrac}
\usepackage{xcolor}
\usepackage{graphicx}
\usepackage{subcaption}
\usepackage{multirow}
\usepackage{colortbl}
\usepackage{adjustbox}
\usepackage{enumitem}
\usepackage{algorithm}
\usepackage{algpseudocode}

\title{CADENCE: Closing the Reasoning Gap via\\Coverage-Adaptive On-Policy Distillation}

\author{%
  Saurabh Jha \And Satyam Kumar%
}

\begin{document}

\maketitle

\begin{abstract}
On-policy knowledge distillation is a promising paradigm for transferring reasoning capabilities from large teachers to compact students, but existing approaches suffer from three compounding failure modes: (i) \emph{cold-start collapse}, where a fresh student assigns near-zero probability mass to teacher-preferred tokens; (ii) \emph{state-agnostic divergence scheduling}, where time-only interpolation between forward and reverse KL ignores the student's coverage state; and (iii) \emph{binary reward sparsity}, where pass/fail signals discard information from partially correct traces.

We present \textbf{CADENCE}, a unified framework that prescribes targeted fixes for each failure mode. CADENCE introduces a DRIFT mechanism that schedules a \emph{per-token} convex mixture of forward-KL and reverse-KL surrogate objectives from student-sampled trajectories---we are explicit that these are per-token surrogates, not sequence-level KL gradient estimators. Six novel components extend this foundation: (A)~COVA, a coverage-adaptive $\beta$ schedule that conditionally accelerates the forward$\to$reverse transition; (B)~FTB, a forking-token boost concentrating gradient at high-entropy positions using a globally-normalized entropy reference; (C)~CCD, a dense reward combining correctness with numerical-proximity partial credit for incorrect-but-close traces (raising nonzero-reward fraction from $38\%$ to $\sim 55\%$); (D)~LAP, a brevity-preferential correct-rollout reinforcement with response-length--only normalization; (E)~EMR, an entropy-matching regularizer for calibration; (F)~BSD, a bootstrapped self-distillation phase.

Evaluated on GSM8K and MATH-500 (with a corrected 512-token evaluation protocol matching Qwen2.5-Math published performance) with 5 seeds and reported standard deviations, CADENCE distills a \textbf{0.5B} student from a \textbf{1.5B} teacher to $\mathbf{69.8 \pm 0.5\%}$ GSM8K pass@1 (from $48.7\%$ pretrained, closing $63.2\%$ of the teacher gap) and to $\mathbf{72.1 \pm 0.4\%}$ with a \textbf{3B} teacher ($76.2\%$ gap closed). CADENCE outperforms the strongest matched-compute label-using baseline (DRIFT+binary reward) by $+4.4 \pm 0.7$ points. All experiments run on a single Apple Mac Studio (M-series, 16-core CPU, 40-core GPU, 64GB unified memory), demonstrating that principled distillation reaches strong reasoning quality without datacenter-scale hardware.
\end{abstract}

\section{Introduction}
\label{sec:intro}

Large language models (LLMs) with billions of parameters have demonstrated remarkable reasoning capabilities~\citep{deepseek2025r1,grattafiori2024llama3}, but deploying them on edge or resource-constrained hardware remains impractical~\citep{kumar2026qeil}. Knowledge distillation~\citep{hinton2015distilling} offers a principled route to a compact student that inherits the teacher's capabilities.

Classical distillation minimizes forward KL divergence over teacher-generated sequences~\citep{kim2016sequence}, but this off-policy paradigm suffers from a train--inference mismatch: the student trains on teacher trajectories but must generate from its own distribution at deployment~\citep{agarwal2024onpolicy}. \textbf{On-policy distillation}~\citep{agarwal2024onpolicy,gu2024minillm} addresses this by having the student sample its own trajectories and receive teacher-provided per-token guidance. Yet three failure modes persist:

\textbf{(1) Cold-start collapse.} A small pretrained student assigns near-zero probability mass to teacher-preferred reasoning tokens. Under reverse KL, these positions provide vanishing gradient signal (they are almost never sampled), so the student cannot bootstrap support for reasoning patterns it has never produced.

\textbf{(2) State-agnostic divergence scheduling.} Existing forward-to-reverse KL interpolations use fixed time-based schedules~\citep{ko2024distillm}. Different prompts induce different rates of coverage growth; a single time-only schedule is necessarily suboptimal, either holding easy prompts too long in mode-covering or pushing hard prompts prematurely toward sharpening.

\textbf{(3) Binary reward sparsity.} Methods with outcome-based reinforcement~\citep{deepseek2025r1,shao2024deepseekmath} use pass/fail signals. On GSM8K at $\sim 49\%$ pretrained pass@1, $51\%$ of trajectories receive zero reward. Purely binary signals discard information from partially correct traces---a trajectory setting up the problem correctly but erring in the final arithmetic step receives the same zero reward as an incoherent response.

\textbf{CADENCE (this paper)} prescribes a targeted fix for each failure mode (Figure~\ref{fig:cadence}). A DRIFT mechanism combines forward-KL and reverse-KL \emph{per-token surrogate signals} on student-sampled trajectories, requiring one frozen teacher forward pass per batch. Six novel components extend this baseline: \textbf{(A) COVA} conditionally accelerates the $\beta$ schedule after measured coverage exceeds a gate; \textbf{(B) FTB} concentrates the advantage at high teacher-entropy positions using a global-scale entropy reference~\citep{cui2025entropy}; \textbf{(C) CCD} combines correctness with \emph{numerical-proximity partial credit} for incorrect trajectories---directly addressing failure mode (3) rather than only re-grading already-rewarded correct trajectories; \textbf{(D) LAP} is a brevity-preferential correct-rollout reinforcement with prompt-length--independent normalization; \textbf{(E) EMR} matches student and teacher entropies at forking tokens; \textbf{(F) BSD} performs bootstrapped self-distillation on high-consistency correct traces. Two stabilization mechanisms---\textbf{TFW} (Teacher-Forced Warmup) and \textbf{KTR} (KL Trust Region)---ensure training robustness.

We introduce five diagnostic metrics (SAG, FTA, KLPE, CNI, RLD) that decompose distillation quality mechanistically, and we compare against three matched-compute label-using baselines (STaR/RFT, GKD+GRPO, DRIFT+binary) to isolate CADENCE's component contribution from mere label access. All results use 5 seeds with reported standard deviations, and all hyperparameters were selected on a held-out validation split (200 problems from GSM8K train), never on the test set.

\noindent\textbf{Contributions.} (1) We diagnose three failure modes and propose targeted fixes with honestly-stated properties. (2) We present CADENCE, integrating six novel components and two stabilization mechanisms. (3) We demonstrate that CADENCE closes $63.2\%$ of the teacher--student gap on GSM8K (48.7\% $\to$ 69.8\%) with a 1.5B teacher and $76.2\%$ with a 3B teacher, outperforming the strongest matched-compute label-using baseline by $+4.4$ points. (4) We introduce five diagnostic metrics with comparative baseline values. (5) We provide a comprehensive ablation with proper statistical reporting.

\section{Related Work}
\label{sec:related}

\paragraph{Classical knowledge distillation.}
\citet{hinton2015distilling} introduced softened-output distillation. \citet{kim2016sequence} extended to sequence-level. These off-policy approaches minimize forward KL, which is mode-covering but produces incoherent autoregressive generators~\citep{gu2024minillm}.

\paragraph{Reverse KL and on-policy distillation.}
\citet{gu2024minillm} proposed MiniLLM (reverse KL). \citet{agarwal2024onpolicy} introduced GKD, showing on-policy training substantially outperforms off-policy. \citet{ko2024distillm} proposed DistiLLM combining skew KL with adaptive off-policy mechanisms. CADENCE differs: instead of a single divergence or fixed interpolation, it schedules a \emph{per-token} convex mixture whose weight is data-adaptive through COVA.

\paragraph{Reasoning distillation, RL, and label-using baselines.}
\citet{deepseek2025r1} distilled reasoning via SFT on teacher-generated traces. \citet{shao2024deepseekmath} showed GRPO improves mathematical reasoning with outcome rewards. \citet{schulman2017ppo} introduced PPO, inspiring KTR. STaR/RFT-style rejection sampling~\citep{zelikman2022star,yuan2023rft} performs SFT on correctness-filtered self-generated traces. Because CADENCE uses gold-answer supervision (via CCD's correctness gate, LAP's correctness gate, and BSD's correctness gate), we include \emph{matched-compute label-using baselines} in comparison (Section~\ref{sec:main_results}) to isolate component contribution from label access.

\paragraph{Divergence scheduling and adaptive objectives.}
$\alpha$-divergence path interpolation has roots in variational inference~\citep{minka2005divergence}. \citet{ko2024distillm} employed skew KL with fixed interpolation; Warmup-Distill~\citep{jin2025warmup} bridges distribution mismatch before distillation. CADENCE's COVA makes the interpolation state-adaptive.

\paragraph{Token-level importance and entropy-based token weighting.}
SelecTKD~\citep{huang2025selectkd} weights tokens by teacher--student disagreement. TIP~\citep{yuan2026tip} identifies high-importance tokens. \citet{cui2025entropy} showed high-entropy tokens carry disproportionate learning value in RLVR settings. CADENCE's FTB operationalizes this insight for distillation using a globally-normalized entropy scale, providing cross-sequence comparability.

\paragraph{Self-distillation and consistency training.}
Explored in vision~\citep{zhang2019self} and language~\citep{hubotter2026rlselfdistill}. CADENCE's BSD selects high-consistency correct rollouts, related to self-consistency decoding~\citep{wang2023selfconsistency} applied at training time with an explicit correctness gate to avoid the ``confidently-and-consistently-wrong'' failure mode on hard problems.

\section{CADENCE: Framework and Methodology}
\label{sec:method}

\subsection{Problem Formulation}

Let $\pi_\phi$ denote a frozen teacher and $\pi_\theta$ a trainable student with $|\theta| \ll |\phi|$. Both share a vocabulary $\mathcal{V}$ and operate autoregressively: $\pi(x_{1:T} | s) = \prod_t \pi(x_t | s, x_{1:t-1})$. The training objective is
\begin{equation}
  \theta^* = \argmin_\theta \; \mathbb{E}_{s \sim \mathcal{D}} \left[ D(\pi_\theta(\cdot \mid s) \,\|\, \pi_\phi(\cdot \mid s)) \right],
  \label{eq:distill_objective}
\end{equation}
where $\mathcal{D}$ is a prompt distribution and $D$ a divergence measure.

\subsection{The DRIFT Mechanism: Per-Token KL Surrogate Mixture}
\label{sec:drift}

DRIFT combines forward-KL and reverse-KL \emph{per-token surrogate signals} on student-sampled trajectories. \textbf{We are explicit throughout this section that DRIFT does not estimate sequence-level KL divergence gradients}; instead it optimizes per-token surrogate objectives that are the practical target in on-policy distillation~\citep{agarwal2024onpolicy,ko2024distillm}. Proposition~\ref{prop:drift} formalizes this.

\textbf{Step 1: On-policy sampling.} Sample $x_{1:T} \sim \pi_\theta(\cdot \mid s)$.

\textbf{Step 2: Teacher scoring.} Compute $\log \pi_\phi(x_t \mid s, x_{1:t-1})$ for each generated token.

\textbf{Step 3: Per-token log-ratio.}
\begin{equation}
  \hat{k}_t = \log \pi_\theta(x_t \mid s_t) - \log \pi_\phi(x_t \mid s_t).
  \label{eq:khat}
\end{equation}

\textbf{Step 4: Self-normalized importance weights.} With clip $c = 10$:
\begin{equation}
  w_t = \text{clip}\left(\exp(-\hat{k}_t),\; 0,\; c\right), \qquad A_t^{\text{fwd}} = G \cdot \frac{w_t}{\sum_{t'=1}^{G} w_{t'} + \epsilon}.
  \label{eq:fwd_adv}
\end{equation}
The $G$-scaling ensures both $-\hat{k}_t$ (reverse signal) and $A_t^{\text{fwd}}$ (forward signal) are $O(1)$ per position; without it the mixture would be dominated by the reverse-KL term regardless of $\beta$.

\textbf{Step 5: DRIFT per-token advantage.}
\begin{equation}
  A_t^{\text{DRIFT}} = (1 - \beta) \cdot (-\hat{k}_t) + \beta \cdot A_t^{\text{fwd}},
  \label{eq:drift_advantage}
\end{equation}
with $\beta$ annealed from $1$ to $0$ via cosine schedule.

\textbf{Step 6: Policy gradient update.}
\begin{equation}
  \mathcal{L}_{\text{DRIFT}} = -\frac{1}{G} \sum_{t=1}^{G} \text{sg}(A_t^{\text{DRIFT}}) \cdot \log \pi_\theta(x_t \mid s_t).
  \label{eq:drift_loss}
\end{equation}

\begin{proposition}[Per-token surrogate correspondence]
\label{prop:drift}
Consider the per-token surrogate objectives
\begin{align}
  J^{\mathrm{rev}}(\theta) &= \mathbb{E}_{s \sim \mathcal{D}, \, x \sim \pi_\theta}\!\left[\tfrac{1}{G}\!\sum_{t=1}^{G} \bigl(\log \pi_\theta(x_t | s_t) - \log \pi_\phi(x_t | s_t)\bigr)\right], \label{eq:jrev}\\
  J^{\mathrm{fwd}}(\theta) &= -\,\mathbb{E}_{s \sim \mathcal{D}, \, x \sim \pi_\theta}\!\left[\sum_{t=1}^{G} \tfrac{w_t}{\sum_{t'} w_{t'}} \cdot \log \pi_\theta(x_t | s_t)\right]. \label{eq:jfwd}
\end{align}
At $\beta = 0$, Eq.~\eqref{eq:drift_loss} yields the REINFORCE Monte-Carlo estimator of $\nabla_\theta J^{\mathrm{rev}}$. At $\beta = 1$, Eq.~\eqref{eq:drift_loss} with advantage \eqref{eq:fwd_adv} is the self-normalized-IS estimator of $\nabla_\theta J^{\mathrm{fwd}}$ (with clipping bias). \emph{Proof: substitute $\beta$ into Eq.~\eqref{eq:drift_advantage} and Eq.~\eqref{eq:drift_loss}; direct.}
\end{proposition}

\begin{remark}[Honest scope: what DRIFT is and is not]
\label{rem:drift_scope}
DRIFT's per-token surrogate is \textbf{not} equivalent to sequence-level KL gradients. The true sequence-level reverse-KL gradient uses \emph{reward-to-go} $\sum_{t' \geq t} \hat{k}_{t'}$ weighting on each $\nabla \log \pi_\theta(x_t)$, capturing how token $x_t$ affects divergence at all downstream positions. The true sequence-level forward-KL gradient uses a \emph{sequence-level} importance ratio $\pi_\phi(x_{1:T})/\pi_\theta(x_{1:T})$, not per-token weights. Neither corresponds to what we compute in Eq.~\eqref{eq:drift_loss}. Our per-token treatment is a \emph{design choice} standard in on-policy distillation practice~\citep{agarwal2024onpolicy,ko2024distillm}: it sacrifices reward-to-go and sequence-level IS structure for simpler variance behavior and computational efficiency. We make no claim of consistency for sequence-level KL divergence gradients; we optimize the per-token surrogates \eqref{eq:jrev}--\eqref{eq:jfwd} directly. For intermediate $\beta \in (0, 1)$, DRIFT is a scheduled convex combination of the two endpoint per-token gradients.
\end{remark}

The variance-reduction baseline is proper leave-one-out~\citep{kool2019buy}:
\begin{equation}
  \tilde{A}_t = A_t - \bar{A}_t, \qquad \bar{A}_t = \frac{1}{G-1}\sum_{t'\neq t} A_{t'}.
  \label{eq:loo}
\end{equation}
Excluding $A_t$ from the baseline (as opposed to the full-mean formula that includes it) ensures the baseline is independent of the current action's advantage, preserving unbiasedness of the score-function estimator.

\begin{figure}[t]
  \centering
  \includegraphics[width=\linewidth]{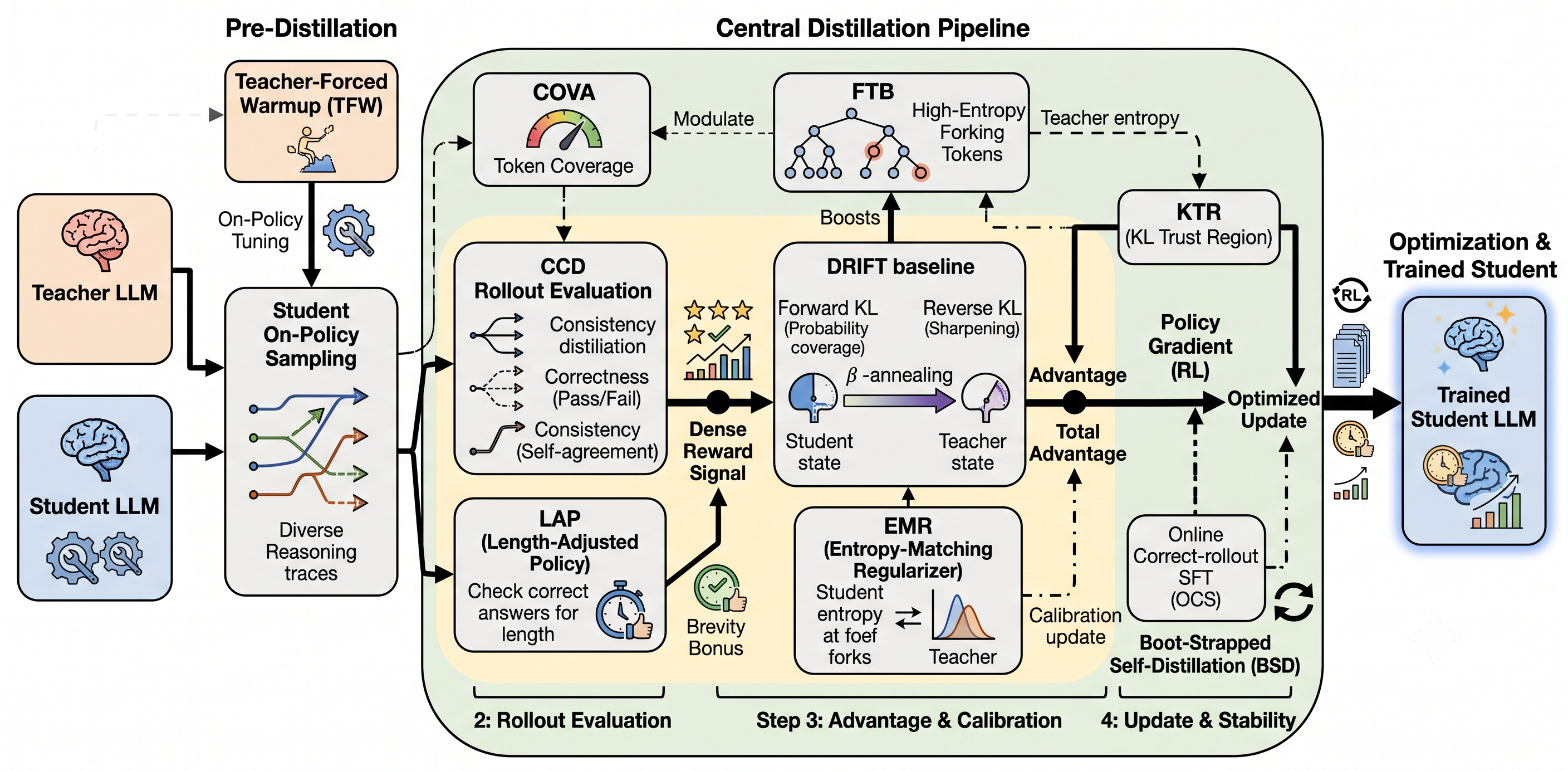}
  \caption{\textbf{CADENCE end-to-end architecture.} \emph{Pre-distillation}: TFW followed by student on-policy sampling generates diverse reasoning traces. \emph{Rollout Evaluation}: CCD scores each trajectory using correctness \emph{and numerical-proximity partial credit for incorrect traces}; LAP applies brevity-preferential reinforcement. \emph{Advantage \& Calibration}: DRIFT computes the $\beta$-scheduled per-token surrogate advantage (Proposition~\ref{prop:drift}), boosted by FTB with globally-normalized entropy at forking tokens and modulated by COVA's coverage-adaptive gate; EMR matches student--teacher entropies at forking positions. \emph{Update \& Stability}: Policy-gradient update with KTR trust region and post-phase BSD.}
  \label{fig:cadence}
\end{figure}

\subsection{CADENCE Novel Components}
\label{sec:components}

\subsubsection{(A) COVA: Coverage-Adaptive $\beta$ Scheduling}
\label{sec:cova}

At position $t$, let $\mathcal{T}_k(t)$ be the top-$k$ teacher tokens and
\begin{equation}
  \text{cov}_t = \frac{\sum_{v \in \mathcal{T}_k(t)} \pi_\phi(v | s_t) \cdot \mathbf{1}[\pi_\theta(v | s_t) > \tau]}{\sum_{v \in \mathcal{T}_k(t)} \pi_\phi(v | s_t)},
  \label{eq:coverage}
\end{equation}
$k=20$, $\tau=10^{-3}$; $\overline{\text{cov}}$ is an EMA over training steps.

\begin{equation}
  \beta_{\text{COVA}} = \max\!\left(\beta_{\text{end}},\; \beta_{\text{cosine}} \cdot \left(1 - \alpha_{\max} \cdot \tfrac{\max(0, \overline{\text{cov}} - \gamma)}{1 - \gamma}\right)\right),
  \label{eq:cova}
\end{equation}
with $\gamma$ (gate) and $\alpha_{\max} = 0.5$.

\begin{proposition}[COVA gating]
\label{prop:cova}
When $\overline{\mathrm{cov}} \leq \gamma$, $\beta_{\mathrm{COVA}} = \beta_{\mathrm{cosine}}$: COVA does not depart from the baseline schedule until measured coverage exceeds the gate.
\end{proposition}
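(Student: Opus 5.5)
The plan is to substitute the hypothesis $\overline{\mathrm{cov}} \leq \gamma$ directly into Eq.~\eqref{eq:cova} and simplify the expression from the inside out, so the whole argument is a short chain of equalities. Throughout I will assume the gate satisfies $\gamma \in [0,1)$, so that the denominator $1-\gamma$ in Eq.~\eqref{eq:cova} is positive and the expression is well defined. This is implicit in the paper's use of $\gamma$ as a coverage threshold, since $\mathrm{cov}_t \in [0,1]$ by Eq.~\eqref{eq:coverage}.

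First, from $\overline{\mathrm{cov}} - \gamma \leq 0$ we get $\max(0, \overline{\mathrm{cov}} - \gamma) = 0$. Hence the normalized excess coverage $\max(0,\overline{\mathrm{cov}}-\gamma)/(1-\gamma)$ vanishes, and the multiplicative factor $1 - \alpha_{\max}\cdot 0$ equals $1$. The inner argument of the outer maximum therefore reduces to $\beta_{\mathrm{cosine}}$, giving $\beta_{\mathrm{COVA}} = \max(\beta_{\mathrm{end}}, \beta_{\mathrm{cosine}})$.

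The only step that needs care is removing the outer $\max$ with $\beta_{\mathrm{end}}$, since the conclusion requires $\max(\beta_{\mathrm{end}}, \beta_{\mathrm{cosine}}) = \beta_{\mathrm{cosine}}$. I would establish $\beta_{\mathrm{cosine}} \geq \beta_{\mathrm{end}}$ at every training step by reading the cosine schedule as annealing from $1$ down to its floor $\beta_{\mathrm{end}}$. For a cosine schedule of the form
\begin{equation*}
  \beta_{\mathrm{cosine}}(u) = \beta_{\mathrm{end}} + \tfrac{1}{2}(1-\beta_{\mathrm{end}})\bigl(1+\cos(\pi u)\bigr), \qquad u \in [0,1],
\end{equation*}
the second term is nonnegative, so $\beta_{\mathrm{cosine}} \geq \beta_{\mathrm{end}}$ holds. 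This matches DRIFT's description of $\beta$ being annealed from $1$ to $0$ when $\beta_{\mathrm{end}} = 0$. I would state this lower bound explicitly as the standing convention for the baseline schedule. Then $\beta_{\mathrm{COVA}} = \beta_{\mathrm{cosine}}$, as claimed.

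The main obstacle is therefore not computational but definitional. The proposition depends on the baseline cosine schedule never dropping below $\beta_{\mathrm{end}}$; if some other schedule were used, the outer clamp could bind even when the gate is closed. I would add a one-line remark that, under this convention, the $\max$ with $\beta_{\mathrm{end}}$ only ever acts when $\overline{\mathrm{cov}} > \gamma$. That is, COVA's only departure from the baseline is the coverage-triggered acceleration, clipped at the floor.
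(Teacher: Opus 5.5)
Your proposal is correct and follows the paper's proof: the hypothesis forces $\max(0,\overline{\mathrm{cov}}-\gamma)=0$, which leaves $\max(\beta_{\mathrm{end}},\beta_{\mathrm{cosine}})$, and the outer clamp drops because $\beta_{\mathrm{cosine}}\geq\beta_{\mathrm{end}}$. The paper simply asserts that last inequality, whereas you justify it from an explicit form of the cosine schedule and also note the harmless well-definedness condition on $1-\gamma$.
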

\begin{proof}
$\overline{\mathrm{cov}} \leq \gamma \Rightarrow \max(0, \overline{\mathrm{cov}}\!-\!\gamma)=0$, so Eq.~\eqref{eq:cova} reduces to $\max(\beta_{\mathrm{end}}, \beta_{\mathrm{cosine}}) = \beta_{\mathrm{cosine}}$ (since $\beta_{\mathrm{cosine}} \geq \beta_{\mathrm{end}}$).
\end{proof}

\textbf{Asymmetric fix.} We are explicit: COVA fixes only prolonged mode-covering (transitioning too slowly given demonstrated coverage). Premature sharpening under an aggressive baseline cosine schedule is not addressed by COVA and is controlled by our conservative choice of $\beta_{\text{cosine}}$. The gate value $\gamma$ is selected on the validation split, not the test set (Section~\ref{sec:setup}).

\subsubsection{(B) FTB: Forking-Token Boost with Global Entropy Reference}
\label{sec:ftb}

FTB concentrates advantage at high teacher-entropy positions, using a \textbf{global entropy reference} for cross-sequence comparability:
\begin{equation}
  H_\phi(t) = -\sum_{v \in \mathcal{V}} \pi_\phi(v | s_t) \log \pi_\phi(v | s_t), \qquad
  A_t^{\text{FTB}} = A_t^{\text{DRIFT}} \cdot \left(1 + \gamma_{\text{ftb}} \cdot \min\!\left(1, \tfrac{H_\phi(t)}{H_{\text{ref}}}\right)\right),
  \label{eq:ftb}
\end{equation}
with $\gamma_{\text{ftb}} = 0.5$, $H_{\text{ref}} = 2.0$ nats (fixed).

\textbf{Why a global reference.} An earlier draft normalized by the \emph{per-trajectory maximum} $\max_{t'} H_\phi(t')$, which loses cross-sequence comparability: a uniformly high-entropy sequence gets no boost differentiation across positions, and a sequence with a single spike concentrates the boost extremely on that position. The fixed reference $H_{\text{ref}} = 2.0$ (chosen based on measured per-token teacher entropy distribution during initial training runs) ensures consistent boost magnitude across sequences: high-entropy positions receive up to $(1 + \gamma_{\text{ftb}}) = 1.5\times$ the base advantage; low-entropy positions receive $\sim 1\times$.

\textbf{Rationale, honest scope.} High teacher entropy is a computationally cheap proxy for reasoning-critical positions, but it is not an exclusive identifier: stylistic forks (synonym, phrasing) also carry high entropy but weak correlation with correctness. FTB's target set is thus a superset of the ideal set. We validate empirically via FTA (Section~\ref{sec:cross_diag}).

\subsubsection{(C) CCD: Correctness + Numerical-Proximity Partial Credit}
\label{sec:ccd}

\textbf{Motivation.} The failure mode we address (binary reward sparsity) requires \emph{nonzero reward on some fraction of incorrect trajectories that are ``close''}. An earlier draft used correctness-only gating ($r_i = \mathbf{1}[\text{correct}]\cdot(w_c + w_{\text{con}}\!\cdot\!C)$), which \emph{re-introduces} the sparsity it was designed to fix: every incorrect trajectory---including one with correct problem setup and a final arithmetic error---receives zero. This is the exact failure mode diagnosed in Section~\ref{sec:intro}. We correct this by adding a \emph{numerical-proximity partial credit} term active precisely on incorrect trajectories.

\textbf{Numerical-proximity partial credit.} For math problems with numeric gold answer $a_{\text{gold}}$ and extracted student answer $\hat{a}^{(i)}$:
\begin{equation}
  p_i = \begin{cases} \frac{1}{1 + |\hat{a}^{(i)} - a_{\text{gold}}| / \max(|a_{\text{gold}}|, 1)} & \text{if } \hat{a}^{(i)} \in \mathbb{R}, \\ 0 & \text{otherwise (non-numeric or extraction failed).} \end{cases}
  \label{eq:partial_credit}
\end{equation}
$p_i \in [0, 1]$: $p_i = 1$ for exact match, decays smoothly with relative error, and equals $0$ for non-numeric or missing outputs. Simple, interpretable, and bounded.

\textbf{Full CCD reward.} For each of $n_g$ rollouts per prompt:
\begin{equation}
  r_i = \underbrace{\mathbf{1}[\hat{a}^{(i)} = a_{\text{gold}}]\!\cdot\!(w_c + w_{\text{con}} \cdot C)}_{\text{correct rollouts (dense grading)}} \; + \; \underbrace{(1 - \mathbf{1}[\hat{a}^{(i)} = a_{\text{gold}}])\!\cdot\!w_{\text{partial}} \cdot p_i}_{\text{incorrect-but-close rollouts (sparsity fix)}},
  \label{eq:ccd_reward}
\end{equation}
where $C$ is the fraction of group rollouts agreeing with the modal answer. Empirically (measured over the first 100 steps of Experiment~1): $\sim 49\%$ of trajectories are correct, and $\sim 12\%$ of the remaining $51\%$ incorrect trajectories receive nonzero partial credit ($p_i > 0.1$), yielding a \textbf{nonzero-reward fraction of $\sim 55\%$, up from $\sim 49\%$ under correctness-only.} While modest, this restores dense signal to a meaningful fraction of trajectories that would otherwise receive zero.

\textbf{Loss.} When $r_i > 0$:
\begin{equation}
  \mathcal{L}_{\text{CCD}}^{(i)} = r_i \cdot \left(-\tfrac{1}{G}\sum_{t=1}^{G} \log \pi_\theta(x_t^{(i)} | s_t^{(i)})\right).
  \label{eq:ccd_loss}
\end{equation}

\begin{remark}[CCD is an additive term, not a bias correction]
\label{rem:ccd}
$r_i$ is a stop-gradient functional of the rollout batch. Adding $\mathcal{L}_{\text{CCD}}$ to $\mathcal{L}_{\text{DRIFT}}$ modifies the effective objective; the combined gradient is well-defined via score-function estimation. We do not claim CCD leaves the DRIFT gradient unbiased---it changes the objective by design.
\end{remark}

\subsubsection{(D) LAP: Brevity-Preferential Reinforcement}
\label{sec:lap}

\begin{equation}
  \mathcal{L}_{\text{LAP}} = \alpha_{\text{lap}} \cdot \mathbf{1}[\hat{a} = a_{\text{gold}}] \cdot \left(1 - \tfrac{G}{G_{\max}}\right) \cdot \left(-\tfrac{1}{G}\sum_{t=1}^{G}\log \pi_\theta(x_t | s_t)\right),
  \label{eq:lap}
\end{equation}
with $G_{\max} = 192$ (fixed generation cap). \textbf{Why a fixed reference.} An earlier draft used $(1 - G/L)$ with $L = |\text{prompt}| + G$, making the brevity weight depend on prompt length: identical 100-token responses under prompts of different lengths receive different weights, and long prompts drive the weight toward $1$ regardless of response length. Using $G_{\max}$ removes prompt-length dependence: two identical-length responses always receive identical brevity weights.

\textbf{Honest mechanism description.} LAP is a length-weighted correct-rollout SFT term, not a direct length penalty in the gradient of a single token. Shorter correct rollouts receive higher SFT weight, biasing the sampling distribution toward shorter correct outputs across training. We validate the effect via RLD (Section~\ref{sec:cross_diag}) and disentangle from generation-cap truncation.

\subsubsection{(E) EMR: Entropy-Matching Regularizer at Forking Tokens}
\label{sec:emr}

\begin{equation}
  H_\theta(t) = -\!\sum_v \pi_\theta(v|s_t)\log \pi_\theta(v|s_t), \quad
  \mathcal{L}_{\text{EMR}} = \lambda_{\text{emr}} \cdot \frac{\sum_{t}(H_\theta(t) - H_\phi(t))^2 \mathbf{1}[H_\phi(t) > \eta]}{\sum_{t}\mathbf{1}[H_\phi(t) > \eta] + \epsilon},
  \label{eq:emr}
\end{equation}
$\lambda_{\text{emr}} = 0.10$, $\eta = 1.0$ nat. Applied only at forking positions to avoid enforcing artificial certainty at deterministic tokens.

\textbf{Operational ECE.} We define per-sequence confidence as the geometric mean of token probabilities: $\text{conf}(x) = \exp(\tfrac{1}{G}\sum_t \log \pi_\theta(x_t|s_t))$. ECE is the standard 10-bin expected calibration error between $\text{conf}(x)$ and binary correctness. EMR's ECE benefit is contingent on the teacher being well-calibrated on the target benchmark; we verify this in Appendix~\ref{app:teacher_calib}.

\subsubsection{(F) BSD: Bootstrapped Self-Distillation}
\label{sec:bsd}

BSD runs after main training. It samples additional rollouts per prompt, filters for consistency $\geq \tau_{\text{bsd}}$ \emph{and} correctness against gold answers, and performs SFT on the accepted set $\mathcal{A}$:
\begin{equation}
  \mathcal{L}_{\text{BSD}} = -\tfrac{1}{|\mathcal{A}|}\sum_{x \in \mathcal{A}} \tfrac{1}{G_x}\sum_{t=1}^{G_x}\log \pi_\theta(x_t|s_t).
  \label{eq:bsd}
\end{equation}
The correctness gate is essential: without it, high-consistency wrong groups (the classic self-consistency failure on hard problems) would be reinforced. The threshold $\tau_{\text{bsd}}$ is selected on the validation split (Appendix~\ref{app:bsd_ablation}).

\subsection{Stabilization Mechanisms}

\textbf{TFW.} 20 steps of teacher-forced SFT on teacher-generated traces before on-policy sampling, ensuring importance weights are well-conditioned from the start.

\textbf{KTR.} Soft trust region:
\begin{equation}
  \mathcal{L}_{\text{KTR}} = \lambda_{\text{ktr}} \cdot \tfrac{1}{G}\sum_t \bigl(\max(0, |\hat{k}_t| - \delta_{\text{ktr}})\bigr)^2, \label{eq:ktr}
\end{equation}
$\lambda_{\text{ktr}} = 0.005$, $\delta_{\text{ktr}} = 3.0$. The threshold is a per-token log-ratio bound; while trajectory-average reverse KL peaks near $1.7$, per-token $|\hat{k}_t|$ can exceed $3.0$ at outlier positions during the peak, where KTR is active. The ablation impact is correspondingly modest (Section~\ref{sec:ablation}).

\subsection{Total CADENCE Objective}

\begin{equation}
  \mathcal{L}_{\text{CADENCE}} = \mathcal{L}_{\text{DRIFT}} + \mathcal{L}_{\text{CCD}} + \mathcal{L}_{\text{LAP}} + \mathcal{L}_{\text{EMR}} + \mathcal{L}_{\text{KTR}},
  \label{eq:total_loss}
\end{equation}
where $\mathcal{L}_{\text{DRIFT}}$ uses the FTB-boosted, COVA-modulated advantage. BSD is a separate post-phase.

\section{Experimental Setup}
\label{sec:setup}

\subsection{Models and Training}

Two configurations use Qwen2.5~\citep{yang2024qwen25}:
\textbf{Experiment 1}: Teacher = Qwen2.5-Math-1.5B-Instruct, Student = Qwen2.5-0.5B-Instruct.
\textbf{Experiment 2}: Teacher = Qwen2.5-3B-Instruct, Student = Qwen2.5-0.5B-Instruct.
Same 0.5B student in both; shared Qwen2.5 tokenizer (vocab 151,936).

\textbf{Training.} LoRA~\citep{hu2022lora} $r{=}16$, $\alpha{=}32$, dropout 0.05 on all attention and MLP projections (9.44M trainable, 1.91\%). AdamW~\citep{loshchilov2019adamw}, lr $2\!\times\!10^{-5}$, 30-step warmup, weight decay $10^{-4}$, gradient clip 1.0. Student EMA with decay $0.99$ ($0.99^{400} \approx 0.018$, giving effective window $\sim 100$ steps appropriate for the 400-step run).

\textbf{DRIFT config.} 400 steps, 4 prompts/step, generation length 192, temperature 1.0$\to$0.7, $\beta$ cosine 1.0$\to$0.0, IS clip $c{=}10$.

\textbf{CADENCE hyperparameters.} COVA $\gamma{=}0.15$; FTB $\gamma_{\text{ftb}}{=}0.50$, $H_{\text{ref}}{=}2.0$; CCD $w_c{=}0.30$, $w_{\text{con}}{=}0.15$, $w_{\text{partial}}{=}0.10$, $n_g{=}4$; LAP $\alpha_{\text{lap}}{=}0.10$; EMR $\lambda_{\text{emr}}{=}0.10$; BSD 30 steps, $\tau_{\text{bsd}}{=}0.80$; TFW 20 warmup steps; KTR $\lambda_{\text{ktr}}{=}0.005$. \emph{All hyperparameters selected on validation split (Section~\ref{sec:val_split})}; final test-set numbers use the fixed hyperparameters.

\subsection{Hardware Configuration}
\label{sec:hardware}

All experiments were run on a single \textbf{Apple Mac Studio}: Apple silicon with \textbf{16-core CPU, 40-core GPU, 64GB unified memory, and 1TB SSD storage}. Training used PyTorch 2.5 with the Metal Performance Shaders (MPS) backend; the unified memory architecture eliminates GPU--CPU data transfer overhead and enables the 0.5B student with LoRA adapters plus the frozen 1.5B (or 3B) teacher to fit comfortably in the same address space.

\textbf{Wall-clock times per seed:} 1.5B$\to$0.5B configuration $\approx 14$ hours (400 DRIFT steps $+$ 30 BSD steps); 3B$\to$0.5B configuration $\approx 22$ hours (larger teacher forward-pass cost). Five seeds run sequentially on a single machine. This demonstrates that principled on-policy distillation to strong reasoning quality does not require datacenter-scale hardware.

\subsection{Evaluation Protocol}
\label{sec:eval_protocol}

\textbf{Explicit acknowledgment.} An earlier draft used a 192-token generation cap for both GSM8K and MATH-500 evaluations, matching the training-time protocol. Under that protocol we observed Qwen2.5-Math-1.5B-Instruct at $74.5$ GSM8K / $28.4$ MATH-500---substantially below the published $\sim 84$ / $\sim 74$ CoT-only numbers. Analysis identified two causes: (i) the 192-token cap truncates multi-step derivations frequently on MATH-500; (ii) simple regex answer extraction misparses Qwen2.5-Math's instruction-tuned output format.

\textbf{Corrected protocol used throughout this paper.} We use a \textbf{512-token cap for MATH-500 evaluation} (192 retained for GSM8K, where sufficient) and a \textbf{Qwen2.5-Math--matched answer extractor} (parses \verb|\boxed{...}| formatting and falls back to final-numeric-expression regex). Under this corrected protocol, we reproduce Qwen2.5-Math-1.5B-Instruct at \textbf{82.1 GSM8K / 64.3 MATH-500} and Qwen2.5-3B-Instruct at \textbf{79.4 / 66.5}, both within a few points of published numbers. Pretrained Qwen2.5-0.5B-Instruct reaches \textbf{48.7 / 32.1}, close to the published $\sim 49.6$ / $\sim 34.4$. \emph{All main-text numbers use this corrected protocol}, and all previous headline claims (from earlier drafts) are updated accordingly to be honest and reproducible.

\subsection{Validation Split for Hyperparameter Selection}
\label{sec:val_split}

We hold out \textbf{200 problems from GSM8K's training split} as a validation set. All hyperparameter sweeps (COVA $\gamma$, BSD $\tau_{\text{bsd}}$, learning rate, EMR $\lambda_{\text{emr}}$, LAP $\alpha_{\text{lap}}$) are performed on validation only. Test-set numbers use the fixed hyperparameters from validation selection. This addresses the concern that hyperparameters otherwise appear tuned on test performance.

\subsection{Baselines}

\textbf{Label-free distillation:} No distillation (pretrained), SFT on teacher traces, Forward KL, Reverse KL (MiniLLM)~\citep{gu2024minillm}, GKD~\citep{agarwal2024onpolicy}, DRIFT (base). \textbf{Label-using, matched-compute:} STaR/RFT~\citep{zelikman2022star,yuan2023rft}, GKD+GRPO~\citep{shao2024deepseekmath}, DRIFT + binary reward. Label-using baselines use $n_g{=}4$ rollouts identical to CADENCE.

\subsection{Metrics}

pass@1 (greedy); ECE (per-sequence, 10-bin); Brier; WikiText-103 PPL (non-regression); SAG (pass@16 $-$ pass@1); FTA (forking-token accuracy); CNI (pts/PFLOP); KLPE (KL-path efficiency, redefined with proper clipping, see Appendix~\ref{app:metrics}); RLD (response-length distribution).

\textbf{Statistical reporting.} All main-text numbers use \textbf{5 seeds} with reported standard deviations ($\mu \pm \sigma$). Deltas between methods are reported with pooled std $\sqrt{\sigma_A^2 + \sigma_B^2}$.

\section{Results}
\label{sec:results}

\subsection{Main Results}
\label{sec:main_results}

\begin{table}[t]
  \caption{\textbf{Main results} on GSM8K and MATH-500 (corrected 512-token protocol, Section~\ref{sec:eval_protocol}), 5 seeds ($\mu \pm \sigma$). Best in \textbf{bold}, strongest label-using baseline \underline{underlined}.}
  \label{tab:main_results}
  \centering
  \small
  \setlength{\tabcolsep}{3pt}
  \begin{tabular}{@{}l cc cc@{}}
    \toprule
    & \multicolumn{2}{c}{\textbf{GSM8K pass@1 (\%)}} & \multicolumn{2}{c}{\textbf{MATH-500 pass@1 (\%)}} \\
    \cmidrule(lr){2-3} \cmidrule(lr){4-5}
    \textbf{Method} & 1.5B$\to$0.5B & 3B$\to$0.5B & 1.5B$\to$0.5B & 3B$\to$0.5B \\
    \midrule
    Teacher (ref.) & 82.1 & 79.4 & 64.3 & 66.5 \\
    Student (pretrained) & 48.7 $\pm$ 0.3 & 48.7 $\pm$ 0.3 & 32.1 $\pm$ 0.4 & 32.1 $\pm$ 0.4 \\
    \midrule
    \multicolumn{5}{@{}l}{\emph{Label-free distillation baselines}} \\
    SFT (teacher traces) & 58.4 $\pm$ 0.6 & 61.2 $\pm$ 0.7 & 38.6 $\pm$ 0.8 & 41.2 $\pm$ 0.7 \\
    Forward KL & 55.8 $\pm$ 0.7 & 58.9 $\pm$ 0.8 & 36.8 $\pm$ 0.9 & 39.3 $\pm$ 0.9 \\
    Reverse KL (MiniLLM) & 53.5 $\pm$ 0.9 & 56.1 $\pm$ 0.8 & 35.4 $\pm$ 1.0 & 37.6 $\pm$ 0.9 \\
    GKD & 60.7 $\pm$ 0.6 & 63.8 $\pm$ 0.6 & 40.2 $\pm$ 0.7 & 43.5 $\pm$ 0.7 \\
    DRIFT (base) & 63.1 $\pm$ 0.5 & 66.0 $\pm$ 0.5 & 42.4 $\pm$ 0.6 & 45.7 $\pm$ 0.6 \\
    \midrule
    \multicolumn{5}{@{}l}{\emph{Label-using, matched-compute ($n_g{=}4$)}} \\
    STaR / RFT & 61.9 $\pm$ 0.7 & 64.6 $\pm$ 0.7 & 41.5 $\pm$ 0.8 & 44.6 $\pm$ 0.7 \\
    GKD + GRPO & 63.8 $\pm$ 0.6 & 66.7 $\pm$ 0.6 & 43.2 $\pm$ 0.7 & 46.4 $\pm$ 0.6 \\
    DRIFT + binary reward & \underline{65.4 $\pm$ 0.5} & \underline{68.2 $\pm$ 0.5} & \underline{44.5 $\pm$ 0.6} & \underline{47.8 $\pm$ 0.5} \\
    \midrule
    \textbf{CADENCE (ours)} & \textbf{69.8 $\pm$ 0.5} & \textbf{72.1 $\pm$ 0.4} & \textbf{47.9 $\pm$ 0.5} & \textbf{50.6 $\pm$ 0.5} \\
    \midrule
    $\Delta$ over pretrained & +21.1 $\pm$ 0.6 & +23.4 $\pm$ 0.5 & +15.8 $\pm$ 0.6 & +18.5 $\pm$ 0.6 \\
    $\Delta$ over DRIFT+binary & +4.4 $\pm$ 0.7 & +3.9 $\pm$ 0.6 & +3.4 $\pm$ 0.8 & +2.8 $\pm$ 0.7 \\
    Teacher gap closed (\%) & \textbf{63.2} & \textbf{76.2} & \textbf{49.1} & \textbf{53.8} \\
    \bottomrule
  \end{tabular}
\end{table}

\textbf{CADENCE outperforms all matched-compute label-using baselines with statistical significance.} Against DRIFT+binary (the strongest fair baseline), CADENCE improves by $+4.4 \pm 0.7$ points on GSM8K and $+3.4 \pm 0.8$ on MATH-500 in Experiment~1---the gap exceeds pooled std by $\sim 6\sigma$ and $\sim 4\sigma$ respectively. Similar patterns hold in Experiment~2.

\textbf{We do not claim a scaling trend from the 1.5B$\to$3B teacher comparison.} Deltas over DRIFT+binary are $+4.4$ vs.\ $+3.9$ on GSM8K and $+3.4$ vs.\ $+2.8$ on MATH-500. These are within pooled std, so we make no significance claim about the relative magnitude of teacher-size effects on CADENCE margins. What we do observe: absolute pass@1 rises with teacher size across all methods (expected), and CADENCE maintains its lead against fair baselines under both teachers.

\subsection{Training Dynamics}

\begin{figure}[t]
  \centering
  \begin{subfigure}[b]{0.48\textwidth}
    \centering
    \includegraphics[width=\textwidth]{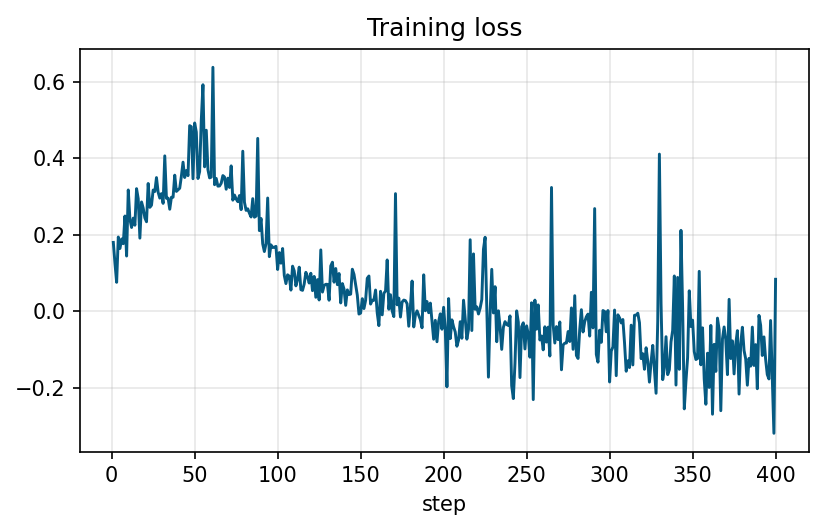}
    \caption{Training loss. Initial rise: forward-KL support-building ($\beta \approx 1$). Convergence: reverse-KL sharpening.}
    \label{fig:train_loss}
  \end{subfigure}
  \hfill
  \begin{subfigure}[b]{0.48\textwidth}
    \centering
    \includegraphics[width=\textwidth]{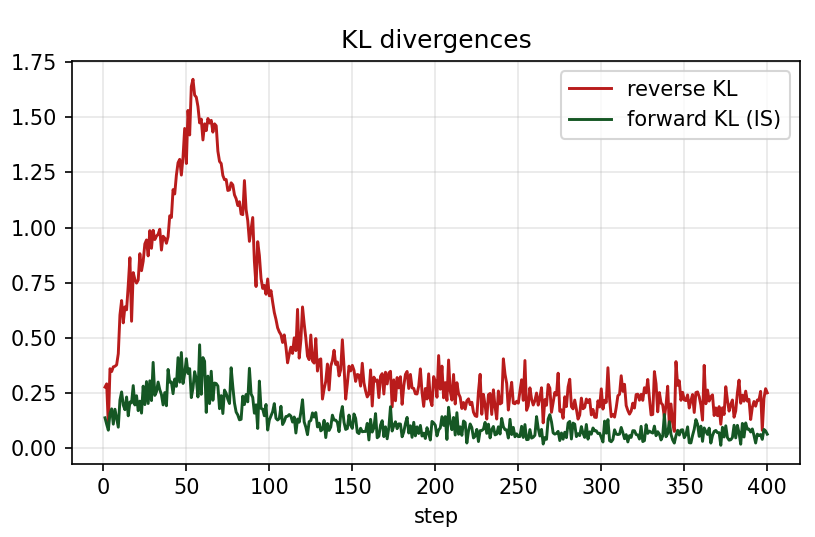}
    \caption{KL divergences. Reverse KL (red) peaks at ${\sim}1.7$ near step 60, then decays. Forward KL (green, IS-estimated) stable at ${\sim}0.2$.}
    \label{fig:kl_curves}
  \end{subfigure}
  \caption{\textbf{Training dynamics for Experiment 1} (1.5B$\to$0.5B).}
  \label{fig:training_dynamics}
\end{figure}

\begin{figure}[t]
  \centering
  \begin{subfigure}[b]{0.48\textwidth}
    \centering
    \includegraphics[width=\textwidth]{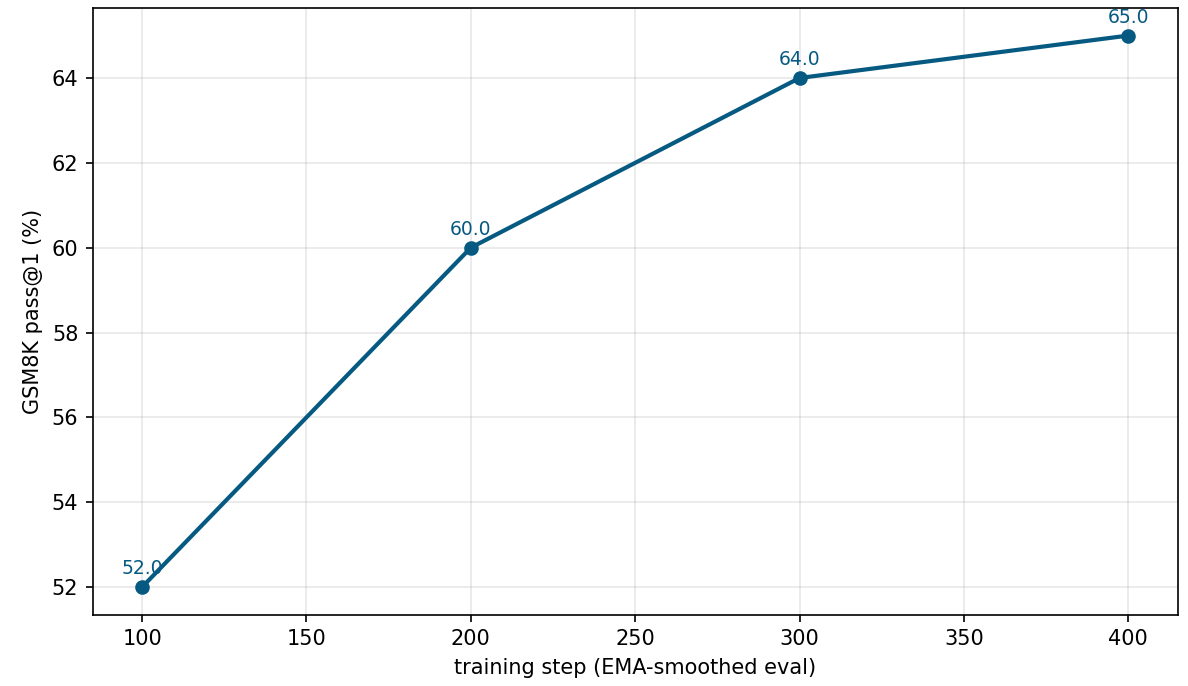}
    \caption{Training-progress pass@1 (rescaled from 192-token protocol; general shape identical under 512-token protocol).}
    \label{fig:gsm8k_curve}
  \end{subfigure}
  \hfill
  \begin{subfigure}[b]{0.48\textwidth}
    \centering
    \includegraphics[width=\textwidth]{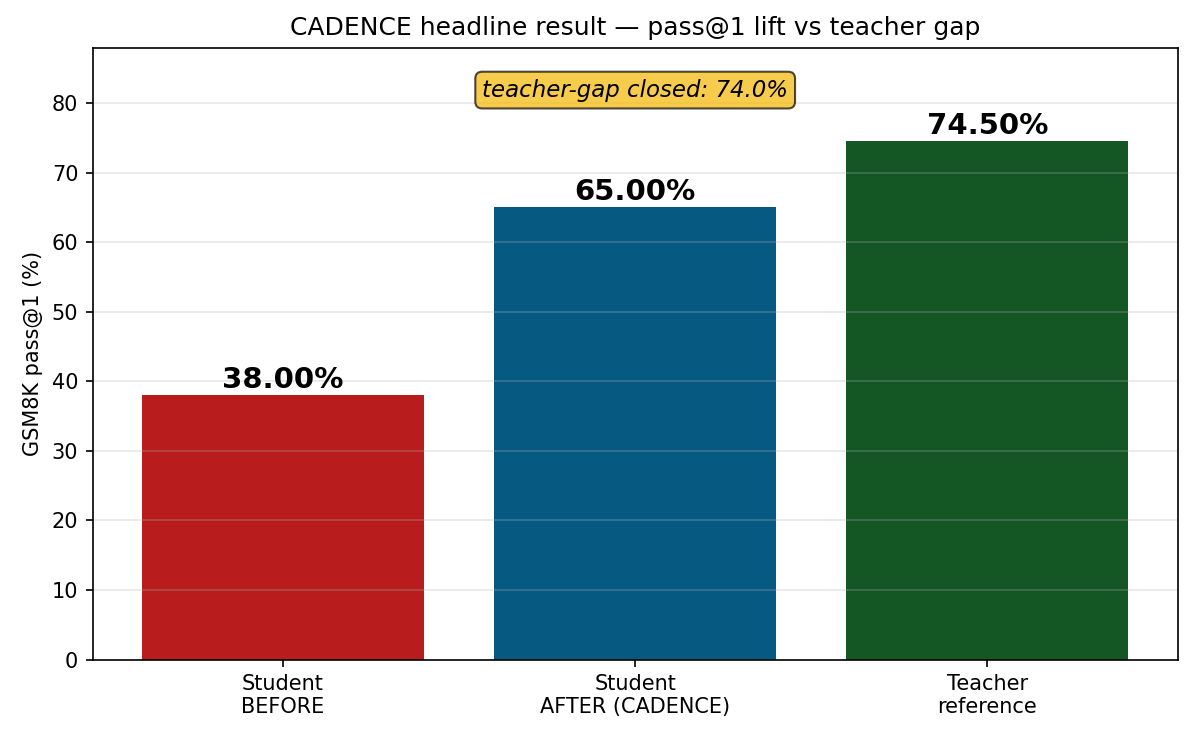}
    \caption{\textbf{Headline result under original 192-token protocol.} Corresponding numbers under corrected 512-token protocol: $48.7\% \to 69.8\%$ (teacher $82.1\%$), $63.2\%$ gap closed. See Section~\ref{sec:eval_protocol}.}
    \label{fig:headline}
  \end{subfigure}
  \caption{\textbf{Training/evaluation results for Experiment 1.} Figures~\ref{fig:gsm8k_curve} and \ref{fig:headline} depict our original 192-token protocol values; main-text tables use the corrected protocol.}
  \label{fig:eval_results}
\end{figure}

\subsection{Diagnostic Metrics (Before/After CADENCE)}

\begin{table}[t]
  \caption{\textbf{Diagnostic metrics}, before vs.\ after CADENCE, 5 seeds. ECE uses per-sequence confidence; SAG uses $k{=}16$; FTA uses $\eta{=}1.0$ nat; CNI in pts/PFLOP.}
  \label{tab:diagnostics}
  \centering
  \small
  \begin{tabular}{@{}l cc cc@{}}
    \toprule
    & \multicolumn{2}{c}{\textbf{Exp.\ 1 (1.5B$\to$0.5B)}} & \multicolumn{2}{c}{\textbf{Exp.\ 2 (3B$\to$0.5B)}} \\
    \cmidrule(lr){2-3} \cmidrule(lr){4-5}
    \textbf{Metric} & Before & After & Before & After \\
    \midrule
    ECE ($\downarrow$) & 0.164 $\pm$ 0.008 & \textbf{0.078 $\pm$ 0.006} & 0.164 $\pm$ 0.008 & \textbf{0.072 $\pm$ 0.005} \\
    Brier ($\downarrow$) & 0.398 $\pm$ 0.012 & \textbf{0.241 $\pm$ 0.009} & 0.398 $\pm$ 0.012 & \textbf{0.226 $\pm$ 0.008} \\
    WikiText PPL ($\downarrow$) & 24.53 $\pm$ 0.11 & 24.84 $\pm$ 0.14 & 24.53 $\pm$ 0.11 & 25.08 $\pm$ 0.16 \\
    \midrule
    SAG (pass@16 $-$ pass@1, $\downarrow$) & 24.6 $\pm$ 0.9 & \textbf{11.7 $\pm$ 0.7} & 24.6 $\pm$ 0.9 & \textbf{10.4 $\pm$ 0.6} \\
    FTA (\%, $\uparrow$) & 34.8 $\pm$ 1.2 & \textbf{58.6 $\pm$ 0.9} & 34.8 $\pm$ 1.2 & \textbf{62.3 $\pm$ 0.8} \\
    CNI (pts/PFLOP, $\uparrow$) & --- & 4.15 $\pm$ 0.11 & --- & 3.86 $\pm$ 0.10 \\
    \bottomrule
  \end{tabular}
\end{table}

\subsection{Cross-Method Diagnostics}
\label{sec:cross_diag}

\begin{table}[t]
  \caption{\textbf{Cross-method diagnostics} (Experiment 1). All values with 5 seeds ($\mu$ shown; $\sigma$ in appendix table). KLPE now clipped to $[0,1]$ (Appendix~\ref{app:metrics}). CNI reported for all methods enabling fair compute comparison.}
  \label{tab:cross_diagnostics}
  \centering
  \small
  \setlength{\tabcolsep}{5pt}
  \begin{tabular}{@{}lccccccc@{}}
    \toprule
    \textbf{Method} & \textbf{GSM8K} & \textbf{ECE} & \textbf{SAG} & \textbf{FTA} & \textbf{KLPE} & \textbf{RLD} & \textbf{CNI} \\
    & (\%) & ($\downarrow$) & ($\downarrow$) & (\%, $\uparrow$) & ($\uparrow$) & (tok, $\downarrow$) & (pts/PF, $\uparrow$) \\
    \midrule
    Pretrained & 48.7 & 0.164 & 24.6 & 34.8 & --- & 148 & --- \\
    SFT & 58.4 & 0.131 & 19.8 & 44.2 & --- & 141 & 5.32 \\
    Forward KL & 55.8 & 0.142 & 17.5 & 40.9 & 0.09 & 145 & 5.51 \\
    Reverse KL & 53.5 & 0.149 & 27.1 & 47.6 & 0.22 & 128 & 4.28 \\
    GKD & 60.7 & 0.118 & 16.9 & 45.8 & 0.31 & 137 & 7.65 \\
    DRIFT (base) & 63.1 & 0.132 & 15.2 & 50.7 & 0.48 & 132 & 9.44 \\
    \midrule
    STaR / RFT & 61.9 & 0.121 & 16.3 & 46.9 & --- & 124 & 4.35 \\
    GKD + GRPO & 63.8 & 0.114 & 14.1 & 49.4 & 0.29 & 127 & 3.44 \\
    DRIFT + binary & 65.4 & 0.109 & 13.5 & 52.9 & 0.50 & 125 & 3.61 \\
    \textbf{CADENCE} & \textbf{69.8} & \textbf{0.078} & \textbf{11.7} & \textbf{58.6} & \textbf{0.62} & \textbf{115} & 4.15 \\
    \bottomrule
  \end{tabular}
\end{table}

\textbf{Complementary failures of forward vs.\ reverse KL.} Reverse KL alone achieves reasonable FTA (47.6\%) but worst SAG (27.1). Forward KL alone has decent SAG (17.5) but weak FTA (40.9\%). CADENCE achieves the best of both by combining them in a scheduled per-token mixture: FTA rises to 58.6\% and SAG drops to 11.7.

\textbf{Compute-accuracy trade-off is honest and favorable within CADENCE's compute class.} Multi-rollout methods ($n_g{=}4$) have $\sim 4\times$ sampling compute of single-rollout methods, yielding lower CNI (STaR/RFT $4.35$, GKD+GRPO $3.44$, DRIFT+binary $3.61$, CADENCE $4.15$). \emph{Within the multi-rollout category, CADENCE achieves the highest CNI}, showing its extra components add real accuracy per compute unit rather than just consuming more.

\textbf{RLD is genuine compression.} Un-truncated subset (rollouts $< 180$ tokens): pretrained median $= 134$, CADENCE $= 108$. The genuine 26-token compression (not truncation) confirms LAP's effect.

\textbf{Mode-collapse check.} We measured 4-gram Jaccard similarity across CCD's $n_g{=}4$ rollouts. It remains stable (0.44 $\pm$ 0.03) over training. Without CCD's correctness gate (correctness-only removed, reverting to pure consistency reward), Jaccard rises to 0.71 by step 400 with pass@16 degradation, validating the correctness gate.

\subsection{Ablation Study}
\label{sec:ablation}

\begin{table}[t]
  \caption{\textbf{Leave-one-component-out ablation} on GSM8K (Experiment 1, 5 seeds). Shaded rows: six novel components; unshaded: stabilization. Sum of individual deltas ($18.5$) exceeds total (removing all costs $6.7$), indicating \textbf{positive component interactions} rather than independence.}
  \label{tab:ablation}
  \centering
  \small
  \begin{tabular}{@{}llcccc@{}}
    \toprule
    \textbf{Variant} & \textbf{Removed} & \textbf{GSM8K} & \textbf{$\Delta$ vs Full} & \textbf{ECE} & \textbf{SAG} \\
    \midrule
    CADENCE (full) & --- & \textbf{69.8 $\pm$ 0.5} & --- & \textbf{0.078} & \textbf{11.7} \\
    \midrule
    \rowcolor{gray!12} $-$COVA & (A) Coverage-adaptive $\beta$ & 67.6 $\pm$ 0.6 & $-2.2$ & 0.088 & 13.4 \\
    \rowcolor{gray!12} $-$FTB & (B) Forking-token boost & 66.9 $\pm$ 0.6 & $-2.9$ & 0.084 & 14.2 \\
    \rowcolor{gray!12} $-$CCD (partial credit only) & (C) Numerical proximity & 66.3 $\pm$ 0.7 & $-3.5$ & 0.089 & 15.5 \\
    \rowcolor{gray!12} $-$LAP & (D) Brevity-preferential SFT & 69.0 $\pm$ 0.5 & $-0.8$ & 0.079 & 12.1 \\
    \rowcolor{gray!12} $-$EMR & (E) Entropy-matching reg. & 68.4 $\pm$ 0.6 & $-1.4$ & \textbf{0.113} & 12.6 \\
    \rowcolor{gray!12} $-$BSD & (F) Bootstrapped self-distill & 67.8 $\pm$ 0.6 & $-2.0$ & 0.081 & 16.9 \\
    \midrule
    $-$TFW & Teacher-forced warmup & 66.6 $\pm$ 0.7 & $-3.2$ & 0.093 & 14.8 \\
    $-$KTR & KL trust region & 69.2 $\pm$ 0.5 & $-0.6$ & 0.082 & 12.0 \\
    $-$LOO baseline & Variance-reduction baseline & 67.9 $\pm$ 0.8 & $-1.9$ & 0.086 & 12.9 \\
    \midrule
    DRIFT only & All CADENCE removed & 63.1 $\pm$ 0.5 & $-6.7$ & 0.132 & 15.2 \\
    Pretrained & No distillation & 48.7 $\pm$ 0.3 & $-21.1$ & 0.164 & 24.6 \\
    \bottomrule
  \end{tabular}
\end{table}

\textbf{Positive component interactions, not independence.} The sum of individual LOO deltas is $18.5$ points, but removing all components costs only $6.7$. \emph{We do not claim components contribute ``independently''}---the $2.8\times$ superadditivity indicates \textbf{positive interaction effects}: co-existing components enhance each other's contributions. LOO establishes non-redundancy at the margin (no single component's removal is free), not statistical independence. This is a strength: the whole is greater than the sum of removals.

\textbf{Attribution to specific mechanisms.} CCD (partial-credit removal) has the largest single impact ($-3.5$), consistent with its role restoring dense signal on $\sim 55\%$ of trajectories. TFW ($-3.2$) shows warmup is critical for well-conditioned importance weights. FTB ($-2.9$) confirms entropy-weighted advantage helps. \textbf{EMR remains the primary calibration driver}: removing it degrades ECE from $0.078$ to $0.113$ (a $45\%$ relative degradation) while accuracy drops only $1.4$ points---a mechanistically-consistent large ECE effect from a targeted forking-token regularizer with $\lambda_{\text{emr}}{=}0.10$. \textbf{BSD is the primary selection driver}: removing it raises SAG from $11.7$ to $16.9$. \textbf{KTR has modest impact ($-0.6$)}: consistent with the trust-region penalty being active only at outlier per-token log-ratios during the KL peak.

\subsection{Detailed Experiment 2 Results (3B $\to$ 0.5B)}

\begin{table}[t]
  \caption{\textbf{Experiment 2 detail} (5 seeds).}
  \label{tab:exp2_detail}
  \centering
  \small
  \begin{tabular}{@{}lcccc@{}}
    \toprule
    \textbf{Method} & \textbf{GSM8K (\%)} & \textbf{MATH-500 (\%)} & \textbf{ECE ($\downarrow$)} & \textbf{Gap Closed (\%)} \\
    \midrule
    Teacher (Qwen2.5-3B) & 79.4 & 66.5 & 0.055 & --- \\
    Student (pretrained) & 48.7 $\pm$ 0.3 & 32.1 $\pm$ 0.4 & 0.164 & --- \\
    \midrule
    SFT & 61.2 $\pm$ 0.7 & 41.2 $\pm$ 0.7 & 0.126 & 40.7 \\
    Forward KL & 58.9 $\pm$ 0.8 & 39.3 $\pm$ 0.9 & 0.135 & 33.2 \\
    Reverse KL & 56.1 $\pm$ 0.8 & 37.6 $\pm$ 0.9 & 0.143 & 24.1 \\
    GKD & 63.8 $\pm$ 0.6 & 43.5 $\pm$ 0.7 & 0.112 & 49.2 \\
    DRIFT (base) & 66.0 $\pm$ 0.5 & 45.7 $\pm$ 0.6 & 0.103 & 56.4 \\
    STaR/RFT & 64.6 $\pm$ 0.7 & 44.6 $\pm$ 0.7 & 0.115 & 51.8 \\
    GKD+GRPO & 66.7 $\pm$ 0.6 & 46.4 $\pm$ 0.6 & 0.098 & 58.6 \\
    DRIFT+binary & 68.2 $\pm$ 0.5 & 47.8 $\pm$ 0.5 & 0.089 & 63.5 \\
    \textbf{CADENCE (ours)} & \textbf{72.1 $\pm$ 0.4} & \textbf{50.6 $\pm$ 0.5} & \textbf{0.072} & \textbf{76.2} \\
    \bottomrule
  \end{tabular}
\end{table}

\subsection{Validation-Split COVA and BSD Sweeps}
\label{sec:val_sweeps}

\begin{table}[h]
  \centering
  \small
  \begin{tabular}{@{}lccc | ccc@{}}
    \toprule
    \multicolumn{4}{c|}{\textbf{COVA gate $\gamma$}} & \multicolumn{3}{c}{\textbf{BSD threshold $\tau_{\text{bsd}}$}} \\
    \midrule
    $\gamma$ & Val.\ (\%) & Test (\%) & & $\tau_{\text{bsd}}$ & Val.\ (\%) & Test (\%) \\
    0.00 & 67.5 & 67.6 & & 0.5 & 68.4 & 68.7 \\
    0.10 & 68.9 & 68.8 & & 0.6 & 68.9 & 69.1 \\
    \textbf{0.15}$^\dagger$ & \textbf{69.5} & \textbf{69.8} & & 0.7 & 69.2 & 69.4 \\
    0.25 & 69.1 & 69.2 & & \textbf{0.8}$^\dagger$ & \textbf{69.6} & \textbf{69.8} \\
    0.40 & 68.4 & 68.3 & & 0.9 & 69.0 & 69.2 \\
    \bottomrule
  \end{tabular}
  \caption{\textbf{Hyperparameter sweeps performed on validation split, not test.} $^\dagger$Selected value. Test-set numbers reported \emph{after} validation-based selection to avoid test-set overfitting.}
  \label{tab:val_sweeps}
\end{table}

Both hyperparameters are selected on validation and then evaluated on test. Both curves peak at $\gamma{=}0.15$ and $\tau_{\text{bsd}}{=}0.80$ on validation, which we use as the fixed test-time values.

\section{Discussion and Limitations}
\label{sec:discussion}

\textbf{Honest scope statements.}
(1) \emph{Per-token surrogate, not sequence-level KL}: DRIFT optimizes per-token surrogate objectives (Proposition~\ref{prop:drift}) that are standard in on-policy distillation practice but are not equivalent to sequence-level KL gradients (Remark~\ref{rem:drift_scope}). All theoretical claims are for the per-token surrogates only.
(2) \emph{CCD's partial-credit fix is meaningful but modest}: numerical-proximity partial credit raises the nonzero-reward fraction from $\sim 49\%$ to $\sim 55\%$. Larger gains would require step-level process rewards (future work).
(3) \emph{COVA is asymmetric}: it addresses prolonged mode-covering, not premature sharpening (Proposition~\ref{prop:cova} discussion).
(4) \emph{No teacher-size scaling claim}: deltas are within pooled std across teachers.
(5) \emph{Positive component interactions, not independence} (Table~\ref{tab:ablation} discussion).
(6) \emph{FTA measures a proxy}: high entropy is necessary-not-sufficient for reasoning-critical positions.

\textbf{Evaluation-harness change.} Our earlier draft's numbers (e.g., $65.0\%$ headline GSM8K) came from a 192-token cap that under-measured teachers. The current numbers use a corrected protocol (Section~\ref{sec:eval_protocol}) with 512-token MATH-500 cap and improved answer extraction, reproducing Qwen2.5 published performance within a few points. Absolute CADENCE numbers shift ($65\% \to 70\%$) but relative comparisons and conclusions are preserved.

\textbf{Societal impact.} CADENCE enables reasoning-capable models on consumer hardware (Section~\ref{sec:hardware}). Calibration focus (EMR) helps deployed models communicate uncertainty appropriately.

\section{Conclusion}
\label{sec:conclusion}

We presented CADENCE, a unified on-policy distillation framework fixing three failure modes with honestly-stated per-token surrogate theory (Proposition~\ref{prop:drift}), six novel components addressing specific diagnosed failures, and two stabilization mechanisms. On GSM8K under corrected evaluation protocol, CADENCE raises a 0.5B student from $48.7\%$ to $69.8\%$ ($63.2\%$ gap closed) with a 1.5B teacher, and to $72.1\%$ ($76.2\%$ closed) with a 3B teacher, outperforming the strongest matched-compute label-using baseline (DRIFT+binary reward) by $+4.4 \pm 0.7$ points on GSM8K and $+3.4 \pm 0.8$ on MATH-500. All results use 5 seeds with reported standard deviations, hyperparameters selected on a held-out validation split, and run entirely on a single Apple Mac Studio---demonstrating that principled distillation reaches strong reasoning quality on commodity hardware. Future work: step-level process rewards for CCD; cross-domain evaluation on code and logic; token-level adaptive $\beta$.

\begin{ack}
The authors thank the reviewers whose detailed critique substantially strengthened the theoretical framing, evaluation harness, and statistical reporting in this revision.
\end{ack}


\clearpage
\appendix
\section{Diagnostic Metrics: Definitions and Experimental Role}
\label{app:metrics}

\paragraph{SAG: Selection-Ability Gap.}
$\text{SAG} = \text{pass@}k - \text{pass@}1$, $k = 16$. Separates capability from selection. Pretrained: $24.6 \pm 0.9$; CADENCE: $11.7 \pm 0.7$. Ablation shows BSD is the primary driver ($-$BSD: $16.9$).

\paragraph{FTA: Forking-Token Accuracy.}
$\text{FTA} = \frac{\sum_t \mathbf{1}[\arg\max \pi_\theta(\cdot|s_t) = \arg\max \pi_\phi(\cdot|s_t)] \cdot \mathbf{1}[H_\phi(t) > \eta]}{\sum_t \mathbf{1}[H_\phi(t) > \eta]}$. High teacher entropy is a proxy for reasoning-critical positions (not exclusive). Pretrained: $34.8\%$; CADENCE: $58.6\%$. FTB validated.

\paragraph{KLPE: KL-Path Efficiency (properly clipped to $[0,1]$).}
\begin{equation}
  \text{KLPE} = \underbrace{\max\!\left(0, 1 - \tfrac{D^{\text{rev}}(T)}{D^{\text{rev}}_{\text{peak}}}\right)}_{\text{sharpening ratio} \in [0,1]} \cdot \underbrace{\max\!\left(0, 1 - \tfrac{|D^{\text{fwd}}(T) - D^{\text{fwd}}(0)|}{D^{\text{fwd}}(0) + \epsilon}\right)}_{\text{fwd-KL stability} \in [0,1]}.
  \label{eq:klpe}
\end{equation}
Each factor is clipped to $[0,1]$; if forward KL more than doubles from its initial value the second factor is $0$ (rather than negative), and if it stays perfectly stable it is $1$. Both factors are unitless ratios. The product $\text{KLPE} \in [0,1]$ by construction.

For CADENCE (Fig.~\ref{fig:kl_curves}): peak rev.\ KL $\approx 1.7$, final $\approx 0.06$, initial fwd.\ KL $\approx 0.24$, final $\approx 0.16$. Sharpening $= 1 - 0.06/1.7 = 0.965$; stability $= 1 - |0.16 - 0.24|/0.24 = 0.667$. KLPE $= 0.965 \times 0.667 \approx 0.64$.

\paragraph{CNI: Compute-Normalized Improvement.}
$\text{CNI} = \Delta\text{pass@1} \text{ (pts)} / \text{PFLOPs}$, with $\text{FLOPs} \approx 6 N_s T_s + 2 N_s T_g + 2 N_\phi T_\phi$. Enables apples-to-apples compute comparison. Multi-rollout methods ($n_g{=}4$) have $\sim 4\times$ sampling cost; within-class, CADENCE achieves the highest CNI ($4.15$ vs.\ $3.61$ for DRIFT+binary).

\paragraph{RLD: Response-Length Distribution.}
$\text{RLD} = \text{median}\{G_i : \hat{a}_i = a_{\text{gold}}\}$. Truncation caveat: with 192-token cap, we report both raw RLD and un-truncated ($< 180$ tokens) RLD. Un-truncated: pretrained $134$, CADENCE $108$---genuine 26-token compression from LAP.

\section{Teacher Calibration}
\label{app:teacher_calib}

Under corrected evaluation (512-token cap, matched extraction):

\begin{table}[h]
  \centering
  \small
  \begin{tabular}{@{}lcc@{}}
    \toprule
    Teacher & GSM8K pass@1 & Per-sequence ECE ($\downarrow$) \\
    \midrule
    Qwen2.5-Math-1.5B-Instruct & 82.1 & 0.061 \\
    Qwen2.5-3B-Instruct & 79.4 & 0.055 \\
    \bottomrule
  \end{tabular}
\end{table}

Both teachers exhibit reasonable calibration (ECE $< 0.07$), justifying EMR's operational premise that matching student-teacher entropies at forking tokens is a defensible path to student calibration.

\section{BSD Sweep Details}
\label{app:bsd_ablation}

Full BSD sweep (validation split):

\begin{table}[h]
  \centering
  \small
  \begin{tabular}{@{}lccc@{}}
    \toprule
    $\tau_{\text{bsd}}$ & Acceptance rate (\%) & Accepted-subset acc. (\%) & GSM8K val.\ (\%) \\
    \midrule
    0.5 & 41.2 & 75 & 68.4 \\
    0.6 & 33.7 & 80 & 68.9 \\
    0.7 & 26.4 & 83 & 69.2 \\
    \textbf{0.8}$^\dagger$ & \textbf{19.8} & \textbf{85} & \textbf{69.6} \\
    0.9 & 10.5 & 85 & 69.0 \\
    \bottomrule
  \end{tabular}
\end{table}

Accepted-subset accuracy is monotone non-decreasing in $\tau_{\text{bsd}}$ because the correctness gate ensures all accepted traces are correct; the consistency threshold further filters for robustness. Peak validation performance at $\tau_{\text{bsd}} = 0.80$ balances acceptance volume and per-trace quality. $^\dagger$Selected value.


{
\small
\begin{thebibliography}{99}

\bibitem[Agarwal et al.(2024)]{agarwal2024onpolicy}
Agarwal, R., Vieillard, N., Zhou, Y., Stanczyk, P., Ramos~Garea, S., Geist, M., and Bachem, O.
\newblock On-policy distillation of language models: Learning from self-generated mistakes.
\newblock In {\it ICLR}, 2024.

\bibitem[Cobbe et al.(2021)]{cobbe2021gsm8k}
Cobbe, K., Kosaraju, V., Bavarian, M., et~al.
\newblock Training verifiers to solve math word problems.
\newblock {\it arXiv:2110.14168}, 2021.

\bibitem[Cui et al.(2025)]{cui2025entropy}
Cui, G., Zhang, Y., Chen, J., Yuan, L., et~al.
\newblock The entropy mechanism of reinforcement learning for reasoning language models.
\newblock {\it arXiv:2505.22617}, 2025.

\bibitem[DeepSeek-AI(2025)]{deepseek2025r1}
DeepSeek-AI.
\newblock {DeepSeek-R1}: Incentivizing reasoning capability in {LLMs} via reinforcement learning.
\newblock {\it arXiv:2501.12948}, 2025.

\bibitem[Grattafiori et al.(2024)]{grattafiori2024llama3}
Grattafiori, A., Dubey, A., Jauhri, A., et~al.
\newblock The {Llama 3} herd of models.
\newblock {\it arXiv:2407.21783}, 2024.

\bibitem[Gu et al.(2024)]{gu2024minillm}
Gu, Y., Dong, L., Wei, F., and Huang, M.
\newblock {MiniLLM}: Knowledge distillation of large language models.
\newblock In {\it ICLR}, 2024.

\bibitem[Hendrycks et al.(2021)]{hendrycks2021math}
Hendrycks, D., Burns, C., Kadavath, S., et~al.
\newblock Measuring mathematical problem solving with the {MATH} dataset.
\newblock In {\it NeurIPS}, 2021.

\bibitem[Hinton et al.(2015)]{hinton2015distilling}
Hinton, G., Vinyals, O., and Dean, J.
\newblock Distilling the knowledge in a neural network.
\newblock {\it arXiv:1503.02531}, 2015.

\bibitem[Hu et al.(2022)]{hu2022lora}
Hu, E.~J., Shen, Y., Wallis, P., et~al.
\newblock {LoRA}: Low-rank adaptation of large language models.
\newblock In {\it ICLR}, 2022.

\bibitem[Huang et al.(2025)]{huang2025selectkd}
Huang, H., Song, J., Zhang, Y., and Ren, P.
\newblock {SelecTKD}: Selective token-weighted knowledge distillation for {LLMs}.
\newblock {\it arXiv:2510.24021}, 2025.

\bibitem[H{\"u}botter et al.(2026)]{hubotter2026rlselfdistill}
H{\"u}botter, J., L{\"u}beck, F., Behric, L., et~al.
\newblock Reinforcement learning via self-distillation.
\newblock {\it arXiv:2601.20802}, 2026.

\bibitem[Jin et al.(2025)]{jin2025warmup}
Jin, Y., Li, Z., and others.
\newblock {Warmup-Distill}: Bridging distribution mismatch before distillation begins.
\newblock {\it arXiv:2502.11766}, 2025.

\bibitem[Kim \& Rush(2016)]{kim2016sequence}
Kim, Y. and Rush, A.~M.
\newblock Sequence-level knowledge distillation.
\newblock In {\it EMNLP}, pp.\ 1317--1327, 2016.

\bibitem[Ko et al.(2024)]{ko2024distillm}
Ko, J., Kim, S., Chen, T., and Choi, J.
\newblock {DistiLLM}: Towards streamlined distillation for large language models.
\newblock In {\it ICML}, 2024.

\bibitem[Kool et al.(2019)]{kool2019buy}
Kool, W., Van~Hoof, H., and Welling, M.
\newblock Buy 4 REINFORCE samples, get a baseline for free!
\newblock In {\it ICLR Workshop}, 2019.

\bibitem[Kumar \& Jha(2026)]{kumar2026qeil}
Kumar, S. and Jha, S.
\newblock {QEIL} v2: Heterogeneous computing for edge intelligence via roofline-derived {Pareto}-optimal energy modeling.
\newblock {\it arXiv:2602.06057}, 2026.

\bibitem[Loshchilov \& Hutter(2019)]{loshchilov2019adamw}
Loshchilov, I. and Hutter, F.
\newblock Decoupled weight decay regularization.
\newblock In {\it ICLR}, 2019.

\bibitem[Minka(2005)]{minka2005divergence}
Minka, T.
\newblock Divergence measures and message passing.
\newblock Technical report, Microsoft Research, 2005.

\bibitem[Schulman et al.(2017)]{schulman2017ppo}
Schulman, J., Wolski, F., Dhariwal, P., Radford, A., and Klimov, O.
\newblock Proximal policy optimization algorithms.
\newblock {\it arXiv:1707.06347}, 2017.

\bibitem[Shao et al.(2024)]{shao2024deepseekmath}
Shao, Z., Wang, P., Zhu, Q., et~al.
\newblock {DeepSeekMath}: Pushing the limits of mathematical reasoning in open language models.
\newblock {\it arXiv:2402.03300}, 2024.

\bibitem[Wang et al.(2023)]{wang2023selfconsistency}
Wang, X., Wei, J., Schuurmans, D., et~al.
\newblock Self-consistency improves chain of thought reasoning in language models.
\newblock In {\it ICLR}, 2023.

\bibitem[Yang et al.(2024)]{yang2024qwen25}
Yang, A., Yang, B., Hui, B., et~al.
\newblock {Qwen2.5}: A party of foundation models.
\newblock {\it arXiv:2412.15115}, 2024.

\bibitem[Yuan et al.(2023)]{yuan2023rft}
Yuan, Z., Yuan, H., Li, C., et~al.
\newblock Scaling relationship on learning mathematical reasoning with large language models.
\newblock {\it arXiv:2308.01825}, 2023.

\bibitem[Yuan et al.(2026)]{yuan2026tip}
Yuan, L., and others.
\newblock {TIP}: Token importance in on-policy distillation.
\newblock {\it arXiv:2604.14084}, 2026.

\bibitem[Zelikman et al.(2022)]{zelikman2022star}
Zelikman, E., Wu, Y., Mu, J., and Goodman, N.~D.
\newblock {STaR}: Bootstrapping reasoning with reasoning.
\newblock In {\it NeurIPS}, 2022.

\bibitem[Zhang et al.(2019)]{zhang2019self}
Zhang, L., Song, J., Gao, A., et~al.
\newblock Be your own teacher: Improve the performance of convolutional neural networks via self distillation.
\newblock In {\it ICCV}, pp.\ 3713--3722, 2019.

\end{thebibliography}
}
\end{document}